\documentclass[10pt]{article}

\usepackage[utf8]{inputenc}
\DeclareUnicodeCharacter{2212}{\ensuremath{-}}  
\usepackage[letterpaper,margin=1in,top=1in]{geometry}
\usepackage{times}
\usepackage{amsmath,amssymb,amsthm,bm}
\usepackage{booktabs}
\usepackage{graphicx}
\usepackage{subcaption}
\usepackage[font=small,labelfont=bf]{caption}
\usepackage{xcolor}
\usepackage{microtype}
\usepackage{mdframed}
\usepackage{float}
\usepackage{url}
\usepackage[round]{natbib}
\usepackage[colorlinks=true,linkcolor=blue!55!black,citecolor=teal!60!black,urlcolor=teal!60!black]{hyperref}
\usepackage{enumitem}

\definecolor{coactive}{HTML}{1F7895}

\newtheorem{theorem}{Theorem}
\newtheorem{proposition}{Proposition}

\theoremstyle{definition}
\newtheorem{assumption}{Assumption}
\newtheorem{definition}{Definition}
\theoremstyle{remark}

\newcounter{algctr}
\newenvironment{algobox}[1]{%
  \refstepcounter{algctr}\par\medskip
  \begin{mdframed}[linewidth=0.6pt,linecolor=black!55,backgroundcolor=black!2,
    innertopmargin=6pt,innerbottommargin=6pt]
  \noindent\textbf{Algorithm \thealgctr:} #1\par\vspace{2pt}\hrule\vspace{4pt}
  \small
}{\end{mdframed}\par\medskip}

\newcommand{\R}{\mathbb{R}}
\newcommand{\E}{\mathbb{E}}
\newcommand{\Hcal}{\mathcal{H}}
\newcommand{\Xcal}{\mathcal{X}}
\newcommand{\Tcal}{\mathcal{T}}
\newcommand{\cost}{\mathrm{cost}}

\newcommand{\Reg}{\mathcal{R}}
\newcommand{\eC}{\textsc{ec}\textsuperscript{2}}

\title{\vspace{-1.5em}\textbf{Cost-Aware Recovery-Pathway Identification and\\
Bayesian Optimization for Autonomous Materials Discovery}}

\author{
  Debajyoti Ray\thanks{The foundational results this paper builds on, the \eC{} algorithm
  for noisy Bayesian active learning \citep{golovin2010near} and GP-UCB \citep{srinivas2010gaussian},
  were developed by the authors at the California Institute of Technology.}\\
  \texttt{dray@coactive.science}\\
  Coactive Science
  \and
  Niranjan Srinivas\footnotemark[1]\\
  \texttt{niranjan@coactive.science}\\
  Coactive Science
}
\date{}

\begin{document}
\maketitle

\begin{abstract}
\noindent
Autonomous laboratories automate experimental execution, but a campaign must also decide which
recovery pathway merits optimization. We formulate this as a sequential decision problem with a
discrete pathway-identification stage and a continuous within-pathway optimization stage under
heterogeneous experimental costs. Our implementation, \textbf{Coactive learning}, combines a
cost-sensitive Bayesian hypothesis-discrimination policy motivated by \eC{} \citep{golovin2010near}
with Gaussian-process Bayesian optimization \citep{srinivas2010gaussian}. Under explicitly stated
assumptions, the expected spend of one fixed-budget campaign attempt is bounded by the expected
pathway-identification cost plus the capped within-pathway optimization budget. We evaluate on synthetic benchmarks constrained by
selected results reported for PNNL's CICERO selective-precipitation study
\citep{ritchhart2026agentic}; these are not reproductions of CICERO's laboratory campaigns. The
method performs comparably to an oracle-pathway Bayesian-optimization reference and to a strong
split-plate baseline that discriminates pathways with its first plate, without an oracle label for
the correct pathway (it is given a candidate hypothesis space and a diagnostic likelihood model). On
an NdFeB-inspired instance it avoids the simulated penalty of a commit-first baseline that initially
selects a plausible but inferior hydroxide pathway, a hypothetical wrong-first-commitment scenario
motivated by the hydroxide--oxalate performance contrast reported by CICERO. Sensitivity of the
conclusions to the assumed cost model is characterized. Code and benchmark are open source.
\end{abstract}

\section{Introduction}
Autonomous (self-driving) laboratories now automate the \emph{execution} of experiments:
liquid handlers dispense, characterization instruments stream results, and agentic planners tie the
loop together \citep{burger2020mobile,macleod2020self,abolhasani2023rise}. As execution is
automated, the remaining bottleneck moves to experiment \emph{selection}: deciding which
experiment, which protocol, and which scientific question merits instrument time. Unlike the data
regime of language models, where training corpora are effectively unlimited, experimental data
arrives one costly measurement at a time, so sample efficiency is the central concern and
\emph{active learning} the natural tool
\citep{settles2009active,mackay1992information,lindley1956measure}. This paper studies the
experiment-selection problem for an autonomous laboratory in which every action has a known price:
not only how to optimize a chosen protocol, but how to choose what to run.

In current platforms \citep{coley2019robotic,stach2021autonomous},
agents plan experiments, robotics execute them, instruments read the results back, and an
optimization policy proposes the next batch. A recent and especially relevant example is
\textbf{CICERO} \citep{ritchhart2026agentic}, an agentic workflow that recovers critical materials
from complex feedstocks through selective precipitation. CICERO combines feedstock characterization,
technoeconomic evaluation, hypothesis generation, experimental planning, robotic execution,
automated analysis, and Bayesian optimization (BO)
\citep{jones1998efficient,shahriari2016taking,frazier2018tutorial}; its results demonstrate the
feasibility of connecting agentic scientific reasoning with an autonomous experimental platform.

CICERO motivates two decision-theoretic questions that are not explicitly analyzed in its
published workflow. First, given several plausible recovery pathways, how should a campaign spend to
discriminate among them before committing continuous optimization budget to one? Committing
incorrectly wastes entire 96-well plates. Second, standard acquisition functions do not price
actions, so a \$100 diagnostic and a \$5{,}000 optimization plate cannot be weighed within one
budget. We study the resulting allocation problem: dividing a heterogeneous assay budget between
reducing uncertainty over candidate pathways and optimizing conditions within a selected pathway.

\paragraph{Contributions.} We formalize and benchmark this allocation problem.
\begin{itemize}[leftmargin=1.4em,itemsep=1pt,topsep=2pt]
\item \textbf{A two-stage cost-sensitive policy (\S\ref{sec:framework}).} Coactive
  learning\footnote{We use \emph{Coactive learning} as a contraction of \emph{cost-aware active
  learning} (and a nod to its origin); it is distinct from the preference-feedback
  ``coactive learning'' of \citet{shivaswamy2012online}.} is a two-stage cost-sensitive policy: a pathway-discrimination stage that selects priced
  diagnostics by posterior decision-region impurity reduction per dollar, motivated by \eC{}
  \citep{golovin2010near}, followed by batch Bayesian optimization with GP-UCB
  \citep{srinivas2010gaussian} within the selected pathway, from a center-biased initial design.
\item \textbf{A conditional cost analysis (\S\ref{sec:theory}).} Under explicit assumptions, the
  expected spend of one fixed-budget attempt is bounded by the expected pathway-identification cost
  plus the capped optimization budget (Prop.~\ref{thm:composite}), and a commit-first baseline that
  initially selects an incorrect pathway pays a quantifiable wrong-commitment penalty
  (Prop.~\ref{prop:dominance}).
\item \textbf{A CICERO-inspired synthetic benchmark and open implementation (\S\ref{sec:benchmark}).}
  We construct synthetic response surfaces constrained by selected endpoints and qualitative trends
  reported by CICERO, for three feedstock-inspired instances, and release a dependency-light
  reference implementation with tests.
\item \textbf{Empirical evaluation and ablations (\S\ref{sec:experiments}).} The method performs
  comparably (within $2\%$ on the NdFeB-inspired instance) to an oracle-pathway BO reference and to
  a split-plate baseline, without an oracle pathway label, and avoids the simulated
  wrong-commitment penalty of a commit-first baseline ($\approx\!1/5$ its cost on that instance);
  ablations and sensitivity studies characterize when priced discrimination is justified.
\end{itemize}
We emphasize at the outset (\S\ref{sec:benchmark}, \S\ref{sec:discussion}) that our experiments use a
\emph{endpoint-constrained synthetic simulator}, not new wet-lab runs; the contribution is
the decision policy and its analysis, isolated on a benchmark that reproduces the paper's reported
optima.

\paragraph{Outlook.} The framework is not specific to the CICERO workflow. The two-phase structure
(identify the decision region, then optimize within it, each stage priced explicitly) applies to
any campaign that first chooses among discrete protocol families and then tunes continuous
parameters under heterogeneous assay costs, including separations, synthesis-route selection, and
formulation problems such as battery electrolytes. The method takes a hypothesis space and priced
actions as input and returns experiment selections, so it can be used inside existing agentic
platforms; \S\ref{sec:future} discusses this direction.

\section{Problem Formulation}\label{sec:formulation}
A recovery \emph{campaign} on a feedstock has two coupled unknowns. \emph{Which} chemistry to use, and
\emph{how} to tune it.

\paragraph{Decision regions and diagnostics (discovery).}
Let $\Hcal$ be a finite set of hypotheses about the feedstock's recovery mechanism, with prior
$p_0$ over $\Hcal$. A known map $r:\Hcal\to\Reg$ assigns each hypothesis to a \emph{decision region}: a set of
hypotheses that imply the same actionable \emph{recovery pathway}, here a target metal and
precipitating reagent. Hypotheses sharing a region are equivalent for the purpose of acting; we use
``recovery pathway'' in scientific prose and ``decision region'' in formal statements. We have a set of diagnostics $\Tcal$ (e.g.\ an ICP-MS composition profile, a small
bench precipitation probe, a technoeconomic screen), each with cost $c(t)>0$ and a known noisy
outcome model $p(o\mid h,t)$. Running $t$ and observing $o$ updates the posterior over $\Hcal$ by
Bayes' rule. The discovery goal is to identify the correct region $r^\star=r(h^\star)$ with posterior
confidence $\ge 1-\delta$, at minimum expected cost.

\paragraph{Response surface (optimization).}
Conditioning on a committed (estimated) region $\widehat r$, recovery quality is an unknown function
$g:\Xcal\to\R$ (purity, yield, or separation factor) over a compact domain $\Xcal\subset\R^d$ of
controllable knobs (reagent equivalents, pH, concentration). Evaluations are noisy,
$y=g(x)+\eta$ with $\eta$ sub-Gaussian, and are run in \emph{batches} (a 96-well plate evaluates
$q$ distinct conditions in parallel) at cost $c_{\mathrm{opt}}$ per plate. We seek a condition that is
$\varepsilon$-optimal, $g(x)\ge g^\star-\varepsilon$ with $g^\star=\max_x g(x)$, equivalently, when
$\tau=g^\star-\varepsilon$, that clears an application target $\tau$. We assume a continuous
kernel on compact $\Xcal$, so $g^\star$ and acquisition maximizers exist.

\paragraph{Objective.}
The campaign goal is to identify $r^\star$ with confidence $1-\delta$ and return an
$\varepsilon$-optimal condition within it. For one campaign attempt, define the expected spend as
\begin{equation}
\textstyle
\E\big[\cost\big] \;=\; \E\Big[\sum_{t\in\text{diagnostics}} c(t)\Big]
\;+\; c_{\mathrm{opt}}\,\E\big[\#\{\text{plates}\}\big].
\label{eq:objective}
\end{equation}
This common cost unit makes diagnostic and optimization expenditures comparable. Within each stage,
the policy divides its stage-specific expected utility by action cost; it does not optimize a single
cross-stage cost-to-target acquisition.

\section{The Coactive-Learning Framework}\label{sec:framework}
Coactive learning runs one loop with two phases sharing a cost-normalized criterion.

\subsection{Discovery: cost-aware \eC{}}\label{sec:disc}
We adopt the \eC{} (Equivalence Class Edge Cutting) objective of \citet{golovin2010near}. Define a
weighted graph on $\Hcal$ whose edges connect hypotheses in \emph{different} regions, with edge weight
equal to the product of their probabilities. Formally, for a posterior $p$ over $\Hcal$ the total
cross-region edge weight is
\begin{equation}
W(p) \;=\; \tfrac12\Big(\big(\textstyle\sum_{h} p(h)\big)^2 - \sum_{\rho\in\Reg}\big(\textstyle\sum_{h: r(h)=\rho} p(h)\big)^2\Big),
\label{eq:edge}
\end{equation}
and the \eC{} objective accrued after observing a set of (diagnostic, outcome) pairs is the expected
cross-region weight \emph{cut}: after a history $\psi$ the realized accrued cut is
$W(p_0)-Z_\psi^2W(p_\psi)$, and expectation enters only through the conditional marginal
acquisition. Driving the cut to its quota $W(p_0)$ identifies the true region only under an
identifiability condition (every complete outcome vector compatible with a single region) that we do
not assume; see Appendix~\ref{app:bounds}.
Exact \eC{} \citep{golovin2010near} places fixed \emph{prior-weighted} edges between hypotheses
in different regions and cuts an edge when an observation is inconsistent with either endpoint; after
a history $\psi$ with prior mass $Z_\psi$ and normalized posterior $p_\psi$, the remaining edge
mass is $Z_\psi^2\,W(p_\psi)$, not $W(p_\psi)$. Under the finite deterministic-realization
expansion this objective is adaptive submodular and strongly adaptively monotone even with correlated
noise. Our implementation uses a distinct surrogate: it evaluates $W$ at the \emph{current
posterior}, so the acquisition is expected reduction in posterior cross-region mass (posterior
decision-region impurity), not the prior-weighted edge count of exact \eC{}. We therefore
present it as a practical acquisition motivated by \eC{} rather than an algorithm inheriting its
guarantee (\S\ref{sec:theory}). At each step the policy selects the unused diagnostic maximizing
expected impurity reduction \emph{per dollar},
\begin{equation}
t^\star \;=\; \arg\max_{t\in\Tcal}\ \frac{\E_{o}\big[\,W(p)-W(p\mid t,o)\,\big]}{c(t)},
\label{eq:ec2greedy}
\end{equation}
and stops once one region's posterior mass exceeds a threshold $1-\delta_{\mathrm{disc}}$ (we use
$0.9$). This posterior-threshold stopping targets approximate Bayesian identification,
$\Pr(r^\star \neq \widehat r) \le \delta_{\mathrm{disc}}$ under the model, rather than the exact
edge cover analyzed for \eC{}.

\subsection{Optimization: batch GP-UCB}\label{sec:opt}
Within the committed region we place a Gaussian-process prior on $g$ \citep{rasmussen2006gaussian} and
run GP-UCB \citep{srinivas2010gaussian}: at round $t$ with posterior mean $\mu_t$ and standard
deviation $\sigma_t$, the acquisition is $\alpha^{\mathrm{UCB}}_t(x)=\mu_t(x)+\beta_t^{1/2}\sigma_t(x)$
with the confidence schedule $\beta_t=2\log\!\big(|D|\,t^2\pi^2/6\delta\big)$ for a discretized domain
$D$ (Thm.~1 of \citealp{srinivas2010gaussian}). Plates of $q$ conditions are formed by sequential
greedy maximization with ``kriging-believer'' fantasies \citep{ginsbourger2010kriging,gonzalez2016batch,desautels2014parallelizing};
we use LogEI \citep{ament2023unexpected} and Thompson sampling \citep{russo2018tutorial} as drop-in
alternatives, and track a purity/yield/cost Pareto front for multi-objective campaigns via
hypervolume improvement \citep{daulton2021parallel}.

\subsection{Stage structure, pricing, and initialization}\label{sec:unify}
Within each stage, candidate actions are scored per unit cost: diagnostics by expected impurity
reduction per dollar \eqref{eq:ec2greedy}, and optimization batches by their acquisition value per
dollar,
\begin{equation}
a^{\$}(\cdot) \;=\; \frac{\text{expected utility}(\cdot)}{\cost(\cdot)} .
\label{eq:perdollar}
\end{equation}
These two per-dollar scores have different units (posterior mass versus objective value), so they
are not compared against each other across stages: the policy is an explicitly \emph{two-stage}
procedure that discriminates pathways first and optimizes second (Algorithm~1). Defining a single
common-currency acquisition across stages (for example, expected reduction in total remaining
cost-to-target) is an open design problem we do not claim to solve here. Finally, the first
optimization plate uses a \emph{center-biased initial design}, a mild bias toward the domain centre
with no information about the optimum and no use of data from other campaigns
(Appendix~\ref{app:sim}).

\begin{algobox}{Coactive learning: cost-aware active-learning campaign}
\textbf{Input:} hypotheses $\Hcal$, prior $p_0$, diagnostics $\Tcal$ with costs, confidence $1-\delta$,
target $\tau$, plate size $q$, initial-design distribution $\pi$.\\[2pt]
\emph{Discovery.} \textbf{while} no region has posterior mass $\ge 1-\delta$ \textbf{and} unused diagnostics remain:\\
\hspace*{1.2em} run diagnostic $t^\star$ maximizing expected posterior impurity reduction per dollar \eqref{eq:ec2greedy}; update posterior.\\
\textbf{if} no region reached the threshold: \textbf{return} \emph{abstain}.\\
\emph{Commit} region $\widehat r=\arg\max_\rho p(\rho)$ and its target/reagent.\\
\emph{Optimization.} seed plate $1$ from $\pi$; \textbf{for} $t=2,3,\dots$:\\
\hspace*{1.2em} fit GP to all observations in $\widehat r$; design a $q$-condition plate by batch GP-UCB;\\
\hspace*{1.2em} run plate; \textbf{if} best observed $\ge\tau$ and at least one plate has followed a first success: \textbf{return}.
\end{algobox}

Setting the discrimination stage, cost-normalization, and center-biased initialization aside
recovers a standard single-pathway BO loop; these three ingredients are what the policy adds, and \S\ref{sec:experiments}
ablates them.

\section{Theoretical Analysis}\label{sec:theory}
We bound the expected spend of a fixed-budget campaign attempt (the expectation of
\eqref{eq:objective} with a random plate count, $\E[\#\{\text{plates}\}]$). Proofs are in Appendix~\ref{app:proofs}. The analysis combines one classical optimization guarantee with assumed discrimination-stage properties, yielding a fixed-budget spend bound and a success probability.
We first recall the structural property the discovery phase relies on.

\begin{definition}[Adaptive submodularity; \citealp{golovin2011adaptive}]\label{def:adsub}
Let $f$ map partial realizations (sets of observed (item, outcome) pairs) to $\R_{\ge0}$ and let
$p$ be a prior over realizations $\Phi$. $f$ is \emph{adaptive submodular} w.r.t.\ $p$ if for all
partial realizations $\psi\subseteq\psi'$ and every item
$e\notin\operatorname{dom}(\psi')$,
$\Delta(e\mid\psi')\le\Delta(e\mid\psi)$, where
$\Delta(e\mid\psi)=\E\big[f(\psi\cup\{(e,\Phi(e))\})-f(\psi)\,\big|\,\psi\big]$.
It is \emph{strongly adaptively monotone} if $f(\psi\cup\{(e,o)\})\ge f(\psi)$ for every
feasible outcome $o$. Cover guarantees additionally require coverability, an $\eta$-gap, and
cost/normalization hypotheses specific to each theorem; adaptive submodularity alone is insufficient.
\end{definition}

\begin{assumption}[Discrimination stage]\label{ass:disc}
The discrimination policy of \S\ref{sec:disc}, run with posterior threshold
$1-\delta_{\mathrm{disc}}$ under a correctly specified likelihood model, either identifies a decision region or, if all diagnostics are exhausted below threshold,
\emph{abstains}; conditional on committing, the committed region $\widehat r$ equals the true region
$r^\star$ with probability at least $1-\delta_{\mathrm{disc}}$, and expected diagnostic cost is at
most $C_{\mathrm{disc}}$. Counting abstention as failure, the discrimination-stage failure
probability is at most $\delta_{\mathrm{disc}}+\Pr(\text{abstention})$. We do \emph{not} assume a competitive ratio for
$C_{\mathrm{disc}}$: the missing justification is that the adaptive-cover hypotheses (objective,
monotonicity, coverability, $\eta$-gap, cost normalization) have not been established for the
implemented posterior-impurity surrogate and threshold stopping rule. Outcomes correlated through
the latent hypothesis preclude the independent-item-state result of \citet{hellerstein2021tight}
specifically, not all adaptive-cover results (Appendix~\ref{app:bounds}); the known bounds are
cited as motivation rather than invoked.
\end{assumption}

\begin{assumption}[Optimization regularity]\label{ass:opt}
$g$ lies in the RKHS of a known kernel $k$ with $\|g\|_k\le B$, $k(x,x)\le1$, and observation noise is
conditionally $R$-sub-Gaussian. Let $\gamma_T$ denote the maximum information gain of $k$ after
$T$ evaluations.
\end{assumption}

For exact \eC{} on equivalence-class determination, greedy cost-sensitive selection is
near-optimal for the edge-cover objective; the history and assumptions of that bound are reviewed in
Appendix~\ref{app:bounds}. Because our implementation optimizes posterior impurity with threshold
stopping, we do not claim that competitive ratio here, and instead analyze the campaign
conditionally on the discrimination stage's error and cost (Assumption~\ref{ass:disc}).

\begin{theorem}[Optimization budget to $\varepsilon$; idealized sequential procedure]\label{thm:opt}
Under Assumption~\ref{ass:opt}, run sequential IGP-UCB
\citep{chowdhury2017kernelized} in the committed region with width
$\sqrt{\beta_t}=B+R\sqrt{2(\gamma_{t-1}+1+\ln(1/\delta))}$ and the matching posterior
regularization. With probability $\ge1-\delta$ the cumulative regret satisfies
$R_T=O\big(\sqrt{T}(B\sqrt{\gamma_T}+\gamma_T)\big)$. Consequently some \emph{queried} point is
$\varepsilon$-optimal once $R_T/T\le\varepsilon$; the sufficient condition $\beta_T\gamma_T=o(T)$ (satisfied by
the squared-exponential kernel, where $\gamma_T=O((\log T)^{d+1})$ and
$T_\varepsilon=\tilde O(\varepsilon^{-2})$) ensures a finite $T_\varepsilon$ exists for any
fixed $\varepsilon$; the condition is sufficient, not necessary. The guarantee concerns the existence of an $\varepsilon$-optimal queried
point; identifying or returning it from noisy observations is a separate estimation step not covered
by the bound.
\end{theorem}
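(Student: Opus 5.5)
The plan is to take the cumulative-regret bound essentially off the shelf and then turn it into the existence-of-an-$\varepsilon$-optimal-query statement by an averaging argument. First, I would verify that Assumption~\ref{ass:opt} matches the hypotheses of Theorem~2 of \citet{chowdhury2017kernelized}. That means checking three things: $g$ lies in the RKHS with $\|g\|_k\le B$; $k(x,x)\le 1$; and the noise is conditionally $R$-sub-Gaussian with respect to the filtration generated by past queries and observations. I would run IGP-UCB with regularization $\lambda=1+\eta$ ($\eta=2/T$), exactly as in their paper, and use the stated width $\sqrt{\beta_t}=B+R\sqrt{2(\gamma_{t-1}+1+\ln(1/\delta))}$. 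Their self-normalized concentration inequality, Theorem~1 there, gives a single event of probability $\ge 1-\delta$ on which $|g(x)-\mu_{t-1}(x)|\le\sqrt{\beta_t}\,\sigma_{t-1}(x)$ holds simultaneously for all $x$ and $t$. This uniformity comes from the self-normalized martingale bound, not a union bound over a discretization, so compact continuous $\Xcal$ is allowed.

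On that event, I would reproduce the standard chain. The instantaneous regret satisfies $r_t\le 2\sqrt{\beta_t}\,\sigma_{t-1}(x_t)$ by optimism. Cauchy--Schwarz then gives $\sum_t\sigma_{t-1}(x_t)\le\sqrt{T\sum_t\sigma^2_{t-1}(x_t)}$. The elliptical-potential/information-gain bound gives $\sum_t\sigma_{t-1}^2(x_t)=O(\gamma_T)$. Since $\beta_t$ is nondecreasing in $t$, this yields $R_T\le 2\sqrt{\beta_T}\,O(\sqrt{T\gamma_T})=O\big(\sqrt T(B\sqrt{\gamma_T}+\gamma_T)\big)$ after absorbing $R$ and $\ln(1/\delta)$ into constants. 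I would state this step by citation rather than re-derive it.

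The consequence follows by averaging. On the same event, $\min_{t\le T}\big(g^\star-g(x_t)\big)\le R_T/T$, so once $R_T/T\le\varepsilon$ some queried $x_t$ is $\varepsilon$-optimal. For existence of $T_\varepsilon$, note that $R_T/T\le C\sqrt{\beta_T\gamma_T/T}$ and $\beta_T=O(\gamma_T)$. The condition $\beta_T\gamma_T=o(T)$ therefore forces $R_T/T\to 0$, so a finite $T_\varepsilon$ exists. This is only a sufficient condition, because the regret bound is an upper bound. For the squared-exponential kernel I would cite $\gamma_T=O((\log T)^{d+1})$ from \citet{srinivas2010gaussian}. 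This gives $R_T/T=\tilde O(T^{-1/2})$, and solving $\tilde O(T^{-1/2})\le\varepsilon$ gives $T_\varepsilon=\tilde O(\varepsilon^{-2})$. The polylogarithmic factors need care here, since $T$ appears inside the logs; I would handle this by a standard monotonicity/inversion argument.

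The main obstacle is bookkeeping rather than new mathematics, in two places. First, the constants and regularization must match Chowdhury--Gopalan exactly, including the $\gamma_{t-1}$ index and the $\lambda=1+\eta$ convention, so that the citation is legitimate. Second, the theorem must be carefully restricted to the idealized sequential procedure. The batch kriging-believer plates of \S\ref{sec:opt} are not covered, and the bound only certifies that a queried point is good. Returning that point from noisy data would need an extra estimation step, for example a posterior-mean or lower-confidence-bound argument, which I would explicitly leave outside the claim as the statement does.
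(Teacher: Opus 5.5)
Your proposal is correct and follows the paper's proof. Both invoke Chowdhury--Gopalan's uniform confidence bound (their Theorem~2, which rests on the self-normalized inequality you cite as Theorem~1) and their IGP-UCB regret theorem with $\lambda=1+2/T$. Both then pass to an $\varepsilon$-optimal queried point via $\min_{t\le T} r_t\le R_T/T$ and the sufficient condition $\beta_T\gamma_T=o(T)$; your optimism/Cauchy--Schwarz/information-gain sketch and the explicit inversion to $T_\varepsilon=\tilde O(\varepsilon^{-2})$ only unpack steps the paper covers by citing their Theorem~3.
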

\noindent Translating this sequential evaluation bound into parallel plate rounds requires a
specifically analyzed batch policy \citep{desautels2014parallelizing,contal2013parallel}; we do not
claim that conversion for the kriging-believer batches used in the benchmark, where full-plate costs
enter only the empirical cost model.
\noindent This analysis applies to an idealized fixed-kernel sequential procedure with a known RKHS
norm bound and unclipped sub-Gaussian noise; the benchmark's clipping of purity observations at
$100\%$ introduces a residual bias outside the theorem. The benchmark implementation instead learns hyperparameters by marginal likelihood, optimizes
over a candidate grid, forms batches with kriging-believer fantasies, and stops on noisy observed
values (Appendix~\ref{app:sim}); it is motivated by, but not identical to, the algorithm covered by
the theorem, and we do not claim the bound characterizes the implementation.

\begin{proposition}[Fixed-budget spend and success bound]\label{thm:composite}
Suppose the discrimination stage satisfies Assumption~\ref{ass:disc}: it may abstain, its expected
diagnostic cost is at most $C_{\mathrm{disc}}$, and \emph{conditional on committing} the committed
region is correct with probability at least $1-\delta_{\mathrm{disc}}$. Suppose the within-region
optimization stage is run for a fixed budget of at most $T$ evaluations on every branch that commits
to a region, with no optimization after abstention; this cap holds unconditionally. Separately,
conditional on correct commitment, the optimization stage returns an $\varepsilon$-optimal
condition with probability at least $1-\delta_{\mathrm{opt}}$. Then the expected spend of this single fixed-budget attempt is at most
\begin{equation}
C_{\mathrm{disc}} \;+\; C_{\mathrm{opt}}(T), \qquad
C_{\mathrm{opt}}(T) = c_{\mathrm{opt}}\big\lceil T/q\big\rceil
\end{equation}
under the full-plate cost model, and the probability of returning an $\varepsilon$-optimal
condition is at least $1-\delta_{\mathrm{disc}}-\Pr(\text{abstention})-\delta_{\mathrm{opt}}$.
(Spend is controlled on every branch because the optimization budget is capped unconditionally and
abstention incurs no optimization spend.)
\end{proposition}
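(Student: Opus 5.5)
The proof splits into two independent parts: the spend bound, which uses only linearity of expectation, and the success bound, which uses only a union bound over failure events. For spend, I will decompose the total cost of one attempt as in \eqref{eq:objective}, so that $\cost = \cost_{\mathrm{disc}} + c_{\mathrm{opt}} N$. Here $\cost_{\mathrm{disc}}$ is the total price of the diagnostics actually run, and $N$ is the random number of optimization plates. Linearity of expectation gives $\E[\cost]=\E[\cost_{\mathrm{disc}}]+c_{\mathrm{opt}}\E[N]$. The first term is at most $C_{\mathrm{disc}}$ directly by Assumption~\ref{ass:disc}. This expectation is taken over all branches, including abstention, so no conditioning is required.

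For the second term I will argue pathwise rather than in expectation. On the abstention event, $N=0$ by hypothesis. On every committing branch, whether or not the committed region is correct, at most $T$ evaluations are run. Under the full-plate cost model a partially filled plate still costs $c_{\mathrm{opt}}$, so $N\le\lceil T/q\rceil$. Hence $N\le\lceil T/q\rceil$ almost surely, and therefore $c_{\mathrm{opt}}\E[N]\le C_{\mathrm{opt}}(T)$. The point to stress is that the cap is unconditional. This is exactly what lets us avoid any dependence on $\delta_{\mathrm{disc}}$ or on the wrong-region behaviour of GP-UCB; Theorem~\ref{thm:opt} is not needed for the spend bound at all.

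For success, I will define the events $A$ (abstain), $M$ (commit with $\widehat r\neq r^\star$), and $F$ (commit correctly but fail to return an $\varepsilon$-optimal condition). Success is implied by the complement of $A\cup M\cup F$, so $\Pr(\text{fail})\le\Pr(A)+\Pr(M)+\Pr(F)$. Assumption~\ref{ass:disc} gives $\Pr(M)=\Pr(\text{commit})\Pr(\widehat r\neq r^\star\mid\text{commit})\le\delta_{\mathrm{disc}}$. Similarly, $\Pr(F)=\Pr(\widehat r=r^\star)\Pr(F\mid \widehat r=r^\star)\le\delta_{\mathrm{opt}}$. Adding these gives the stated $1-\delta_{\mathrm{disc}}-\Pr(\text{abstention})-\delta_{\mathrm{opt}}$.

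The only delicate step is the bookkeeping of conditional probabilities. The hypotheses are stated conditionally (on committing, respectively on correct commitment), so I must check that multiplying by probabilities that are at most $1$ converts them into unconditional bounds without double-counting abstention. Beyond that, I will remark that the bound is deliberately loose: it does not exploit the early-stopping rule of Algorithm~1, nor correlations between the stages.
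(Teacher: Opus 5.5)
Your proof is correct and follows the paper's argument, just with the details written out. You use linearity of expectation for the spend, with the unconditional pathwise cap $N\le\lceil T/q\rceil$ and $N=0$ after abstention, and a union bound for success, where your split into abstention and wrong commitment is exactly the discrimination-failure bound $\delta_{\mathrm{disc}}+\Pr(\text{abstention})$ of Assumption~\ref{ass:disc}. One wording fix: $N\le\lceil T/q\rceil$ follows from each plate holding $q$ conditions, while the full-plate cost model only explains why a final partial plate is still charged the full $c_{\mathrm{opt}}$.
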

\noindent\emph{Proof.} Stage spends add, so expected spend is bounded by linearity of expectation;
success requires both stages to succeed, and a union bound over the two failure events gives the
probability statement. $\square$

The next result quantifies the cost of committing to a pathway incorrectly under a specific
restricted baseline.

\begin{proposition}[Penalty of wrong initial commitment; restricted baseline]\label{prop:dominance}
Consider a \emph{commit-first} baseline that selects one decision region $\widehat r$ without
diagnostics (a fixed default, or an independent draw from $p_0$), optimizes within it, and, if
$\widehat r\neq r^\star$, expends exactly $T_{\mathrm{fail}}$ evaluations
($c_{\mathrm{opt}}\lceil T_{\mathrm{fail}}/q\rceil$ in plates) before detecting failure and
switching to $r^\star$. For each policy let $C_{\mathrm{post}}$ denote its \emph{residual} optimization spend, defined on
every branch (zero after abstention), excluding diagnostics and excluding the separately counted
wrong-commitment spend $c_{\mathrm{opt}}\lceil T_{\mathrm{fail}}/q\rceil$. Assume
$\E[C_{\mathrm{post}}^{\mathrm{commit\text{-}first}}]\ge\E[C_{\mathrm{post}}^{\mathrm{two\text{-}stage}}]$. Let
$p_{\mathrm{corr}}=\Pr(\widehat r=r^\star)$; when the prior induces a uniform distribution over
decision regions, $p_{\mathrm{corr}}=1/|\Reg|$ for either selection rule. Then
\begin{equation}
\E[\cost_{\mathrm{commit\text{-}first}}]-\E[\cost_{\mathrm{two\text{-}stage}}]
\;\ge\;(1-p_{\mathrm{corr}})\,c_{\mathrm{opt}}\big\lceil T_{\mathrm{fail}}/q\big\rceil
\;-\;C_{\mathrm{disc}} .
\end{equation}
A \emph{sufficient} condition for diagnostics to be beneficial against this baseline is
$C_{\mathrm{disc}}<(1-p_{\mathrm{corr}})\,c_{\mathrm{opt}}\lceil T_{\mathrm{fail}}/q\rceil$.
The two policies' success probabilities differ in general and are not equated here. The bound
applies to this baseline only, not to split-plate, mixed-space, or portfolio strategies
(\S\ref{sec:experiments}).
\end{proposition}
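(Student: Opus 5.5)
We decompose each policy's expected spend into three additive pieces and compare them term by term. The pieces are diagnostic spend, wrong-commitment spend, and residual optimization spend $C_{\mathrm{post}}$.

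For the commit-first baseline, the diagnostic spend is zero by construction. Its wrong-commitment spend equals $c_{\mathrm{opt}}\lceil T_{\mathrm{fail}}/q\rceil$ on the event $\{\widehat r\neq r^\star\}$ and zero otherwise. By linearity of expectation,
\begin{equation*}
\E[\cost_{\mathrm{commit\text{-}first}}]=(1-p_{\mathrm{corr}})\,c_{\mathrm{opt}}\lceil T_{\mathrm{fail}}/q\rceil+\E\big[C_{\mathrm{post}}^{\mathrm{commit\text{-}first}}\big].
\end{equation*}
For the two-stage policy, Assumption~\ref{ass:disc} bounds the expected diagnostic spend by $C_{\mathrm{disc}}$. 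Its wrong-commitment spend, in the sense separately counted in the statement, is nonnegative. The residual spend is defined on every branch, including abstention, where it is zero. Any optimization the two-stage policy does after an incorrect commitment is absorbed into $C_{\mathrm{post}}^{\mathrm{two\text{-}stage}}$, because the policy does no detect-and-switch. This gives
\begin{equation*}
\E[\cost_{\mathrm{two\text{-}stage}}]\le C_{\mathrm{disc}}+\E\big[C_{\mathrm{post}}^{\mathrm{two\text{-}stage}}\big].
\end{equation*}

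Subtracting the two displays and applying the hypothesis $\E[C_{\mathrm{post}}^{\mathrm{commit\text{-}first}}]\ge\E[C_{\mathrm{post}}^{\mathrm{two\text{-}stage}}]$ yields the claimed inequality. The sufficient condition then follows immediately, since it makes the right-hand side strictly positive.

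Next we verify the claim $p_{\mathrm{corr}}=1/|\Reg|$ under a uniform induced region distribution.
\begin{itemize}
\item For an independent draw $\widehat r\sim p_0\circ r^{-1}$, we have $\Pr(\widehat r=r^\star)=\sum_\rho \Pr(r^\star=\rho)^2=|\Reg|\cdot|\Reg|^{-2}=1/|\Reg|$.
\item For a fixed default $\rho_0$, we have $\Pr(r^\star=\rho_0)=1/|\Reg|$.
\end{itemize}
Both cases use the fact that $h^\star\sim p_0$, so $r^\star$ has the induced region distribution. The draw is independent of $h^\star$.

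The main subtlety is bookkeeping rather than analysis: the accounting must make no cost fall between the categories. In particular, the two-stage policy's abstention branch and its wrong-commit branch must land entirely in $C_{\mathrm{post}}$ or in the diagnostic term. This is exactly what the statement's definition of $C_{\mathrm{post}}$ as "defined on every branch (zero after abstention)" provides. We also need $T_{\mathrm{fail}}$ to be deterministic, as the statement says "exactly"; otherwise the penalty term would need an expectation of the ceiling. No success-probability comparison is needed, consistent with the remark in the statement.
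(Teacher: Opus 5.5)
Your proof is correct and follows the paper's argument: you split each policy's spend into diagnostic, wrong-commitment, and residual parts, take expectations, bound the two-stage diagnostic term by $C_{\mathrm{disc}}$, and subtract using the hypothesis on $\E[C_{\mathrm{post}}]$. Your explicit check that $p_{\mathrm{corr}}=1/|\Reg|$ under both the fixed-default and independent-draw rules (using $h^\star\sim p_0$) fills in a step the paper only asserts. One wording fix: the upper bound on $\E[\cost_{\mathrm{two\text{-}stage}}]$ needs the two-stage policy's separately counted wrong-commitment term to be \emph{zero}, not merely nonnegative. Your next sentence, which absorbs all of its post-commitment spend into $C_{\mathrm{post}}^{\mathrm{two\text{-}stage}}$, is exactly what supplies this.
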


\section{A CICERO-Inspired Synthetic Benchmark}\label{sec:benchmark}

\paragraph{The CICERO campaigns.} The CICERO study comprises three closed-loop campaigns on real
feedstocks, each an alternating sequence of decisions and priced measurements. A campaign proceeds
as: (i) \emph{Diagnose}, a quantitative ICP-MS reading of
the feedstock that returns element identities and concentrations (for the produced water: Ca
$>\!13{,}000$\,ppm, Na $>\!6{,}000$, Mg $1{,}840$, Sr $450$; for the NdFeB leachate: Fe $62{,}000$,
Nd $23{,}800$, Pr $5{,}900$, B $920$, Co $470$, Dy $27$, Tb $16$); (ii) \emph{Evaluate/Hypothesize}, in which characterization informs economic evaluation and
literature-grounded pathway hypotheses, from which the campaign's pathway is selected; and (iii)
\emph{Optimize}, rounds of $96$-well plates ($8$ conditions $\times\,12$ replicates) with commodity
reagents (NaOH, NaHCO$_3$, sodium oxalate), read out by automated ICP-MS as purity (mol\%), yield,
and separation factor. A single data point is therefore a (condition, assay) pair inside a sequential,
budget-constrained loop; the object of study is the campaign-level policy rather than any single
model fit.

\paragraph{Benchmark construction.} We construct synthetic response surfaces constrained to
reproduce selected endpoints and qualitative trends reported by CICERO \citep{ritchhart2026agentic}
(listed below), with Gaussian assay noise whose level ($4.5\%$ relative) is an author assumption
motivated by the $4.0$--$7.6\%$ repeatability range CICERO reports for its automated ICP-MS pipeline
in one produced-water experiment. The raw campaign trajectories and complete response measurements
are not publicly available, and the equations (Appendix~\ref{app:sim}) are author-designed
benchmark functions. These instances should therefore be interpreted as controlled benchmarks
inspired by CICERO, not as statistical reconstructions or performance evaluations of CICERO's
laboratory campaigns; absolute costs and cost ratios depend on the modeling assumptions, whose
provenance and sensitivity are reported in Appendix~\ref{app:sim}.

\begin{itemize}[leftmargin=1.4em,itemsep=1pt,topsep=2pt]
\item \textbf{Produced water (Mg).} One knob (NaOH equivalents); Mg(OH)$_2$ purity peaks near
  $99\%$ at $\approx\!3.4$ equivalents with $\approx\!86\%$ yield, falling as Ca co-precipitates at
  higher equivalents.
\item \textbf{SmCo leachate (Sm).} Two knobs (NaHCO$_3$ $\times$ NaOH); Sm purity peaks near $92\%$.
\item \textbf{NdFeB leachate (Fe/REE).} The discovery decision is sharp: oxalate yields a
  Fe/REE separation factor $\approx\!200$ across a wide concentration range, whereas hydroxide gives a
  low separation factor and collapses at high concentration. A second stage splits Nd/Pr
  (separation factor up to $\approx\!1.35$).
\end{itemize}
Each feedstock defines an \eC{} discovery problem over candidate (target, reagent) hypotheses with
priced diagnostics, and an optimization stage over the knobs above. We charge \$50 per ICP-MS well; an
optimization plate is $8$ conditions $\times\,12$ replicates ($=\!\$4{,}800$). This matches the
plate structure CICERO reports for its produced-water reproducibility experiment; for benchmarking
consistency we impose this common synthetic batch structure on all instances, which differs from
CICERO's campaign-specific designs. Plates are charged in full when submitted: cost accrues per
whole plate, never per well within a submitted plate. A pathway-committed agent still characterises the feedstock to choose a target,
but this is a single ICP-MS reading (as in CICERO), \emph{not} a full plate; \eC{} diagnostics are
priced comparably (\$60--\$120). ``Cost-to-target'' is the spend to first clear the application
target; we also report the best \emph{true} (noiseless) value reached against the numerically computed
achievable optimum of each surface.

\section{Experiments}\label{sec:experiments}
We compare against four references on identical surrogates and acquisition: \textbf{Oracle-pathway
BO}, given the correct pathway in advance (an upper reference); \textbf{Split-plate BO}, which
divides its first plate across the candidate reagent pathways, commits to the best-observed pathway,
and then optimizes within it; \textbf{Commit-first BO}, a simulated baseline that commits to a
plausible default pathway and backtracks on failure (this trajectory is a benchmark construction
motivated by the hydroxide--oxalate contrast CICERO reports, not an observed CICERO campaign); and
\textbf{Random search}. The method is run without an oracle pathway label but with the specified
hypothesis space and diagnostic likelihood model. Swapping the baseline acquisition between GP-UCB
and LogEI changes results negligibly ($90.4$ vs.\ $90.3$ best true value on SmCo). All numbers are
means $\pm$ standard deviation over $20$ seeds; costs are full-plate totals per seed (a mean such as
\$7{,}600 averages seeds that finished in one plate with seeds that needed two). Full setup in
Appendix~\ref{app:sim}.

\begin{table}[h]
\centering
\caption{Cost-to-target and best \emph{true} value reached (mean over 20 seeds), against the achievable
optimum (shown in each instance label). Coactive learning performs comparably to Oracle-pathway BO
and Split-plate BO on all
three feedstocks; on NdFeB, where the chemistry choice is non-obvious, it reaches target at a fraction
of the cost of the simulated commit-first baseline, which commits to hydroxide and must backtrack to
oxalate. On produced water and SmCo the expert and default baselines coincide because the target is
obvious.}
\label{tab:main}
\small
\begin{tabular}{llcc}
\toprule
Feedstock & Method & Cost-to-target (\$) & Best achieved (true) \\
\midrule
Produced water (Mg) (opt 99.2) & \textbf{Coactive learning} & \textbf{4,980 $\pm$ 0} & \textbf{99.2} \\
 & Oracle-pathway BO & 5,370 $\pm$ 1,440 & 99.1 \\
 & Split-plate BO & 5,370 $\pm$ 1,440 & 99.1 \\
 & Commit-first BO & 5,370 $\pm$ 1,440 & 99.1 \\
 & Random search & 5,130 $\pm$ 1,046 & 99.1 \\
\midrule
SmCo (Sm) (opt 92.0) & \textbf{Coactive learning} & \textbf{7,600 $\pm$ 4,150} & \textbf{88.9} \\
 & Oracle-pathway BO & 9,450 $\pm$ 3,552 & 90.4 \\
 & Split-plate BO & 9,450 $\pm$ 3,552 & 90.4 \\
 & Commit-first BO & 9,450 $\pm$ 3,552 & 90.4 \\
 & Random search & 10,170 $\pm$ 5,011 & 86.8 \\
\midrule
NdFeB (Fe/REE) (opt 216.0) & \textbf{Coactive learning} & \textbf{4,985 $\pm$ 0} & \textbf{215.9} \\
 & Oracle-pathway BO & 4,890 $\pm$ 0 & 216.0 \\
 & Split-plate BO & 4,890 $\pm$ 0 & 215.9 \\
 & Commit-first BO & 24,090 $\pm$ 0 & 215.9 \\
 & Random search & 24,090 $\pm$ 0 & 215.2 \\
\bottomrule
\end{tabular}
\end{table}

\begin{figure}[t]
\centering
\includegraphics[width=\textwidth]{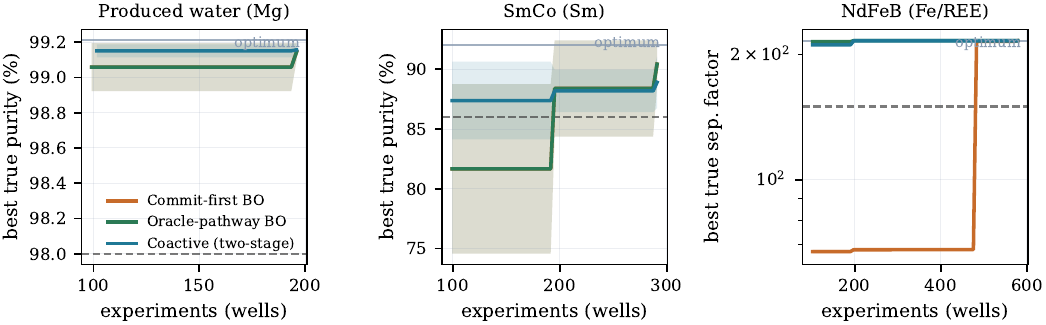}
\caption{Best \emph{true} (noiseless) recovery value vs.\ number of experiments (mean $\pm$ std over 20
seeds). Grey line: achievable optimum; dashed: application target. Coactive learning (teal) tracks the
oracle-pathway reference (green); the commit-first baseline (copper) coincides with it on produced
water and SmCo but, on the NdFeB-inspired instance (log scale), remains on hydroxide for
$\sim\!480$ wells before backtracking to oxalate (a simulated trajectory; see \S\ref{sec:experiments}).}
\label{fig:conv}
\end{figure}

\paragraph{Main result (Table~\ref{tab:main}, Fig.~\ref{fig:conv}).}
On every instance the method performs comparably to Oracle-pathway BO without an oracle pathway
label: on the NdFeB-inspired instance, \$4{,}985 versus \$4{,}890 to target (within $2\%$). The
commit-first baseline that initially selects hydroxide pays the wrong-commitment penalty of
Proposition~\ref{prop:dominance}, about \$24{,}090 versus \$4{,}985 ($\approx\!4.8\times$);
this trajectory is a simulated construction motivated by the hydroxide--oxalate performance contrast
CICERO reports, not CICERO's observed behavior. Importantly, the split-plate baseline also avoids
that penalty at essentially the same cost (\$4{,}890): with only two testable pathways and a broad
above-target basin on the correct pathway, one plate divided across pathways both discriminates and
optimizes. On this benchmark, priced diagnostics and a split first plate are therefore equally
effective ways to buy pathway information; the practical case for diagnostics-first arises when the
candidate pathway set is too large for a single plate to cover at useful per-pathway resolution,
when diagnostics are much cheaper than wells, or when target basins are narrow. We state this as a
hypothesis to be tested at larger pathway counts, not a demonstrated result. On produced water and
SmCo the pathway is unambiguous, all pathway-informed baselines coincide, and discrimination is a
small net overhead; on the one-dimensional produced-water instance random search is competitive.

\begin{figure}[t]
\centering
\begin{subfigure}[t]{0.40\textwidth}
  \includegraphics[width=\textwidth]{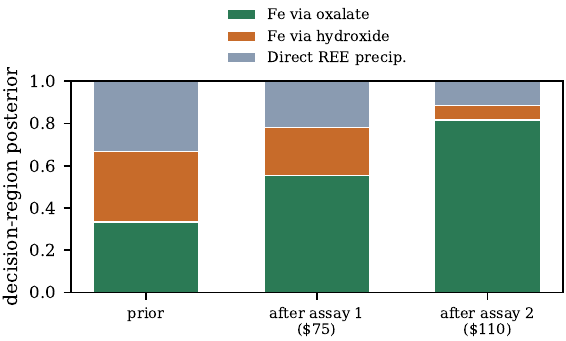}
  \caption{\eC{} decision-region posterior over two cheap NdFeB diagnostics (\$185 total).}
  \label{fig:ec2}
\end{subfigure}\hfill
\begin{subfigure}[t]{0.56\textwidth}
  \includegraphics[width=\textwidth]{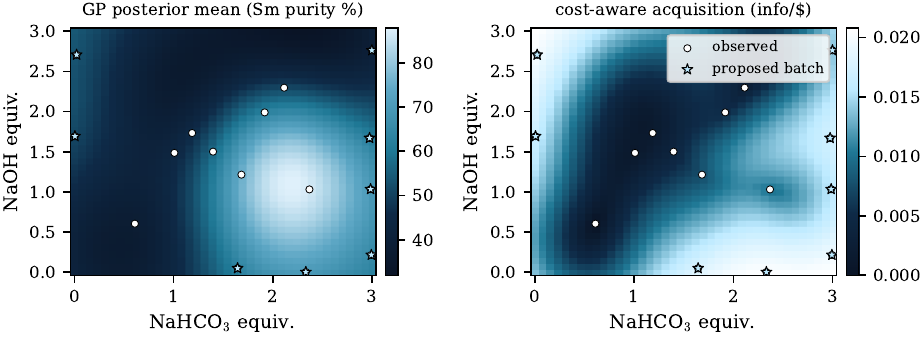}
  \caption{SmCo: GP posterior mean and cost-normalized acquisition over the two reagent knobs, with
  observed (circles) and proposed (stars) conditions.}
  \label{fig:surr}
\end{subfigure}
\caption{Discovery (left) concentrates the posterior on the correct chemistry before any optimization
plate; optimization (right) then designs plates by cost-normalized GP-UCB.}
\label{fig:qual}
\end{figure}

\paragraph{Ablations (Table~\ref{tab:abl}).}
Removing the discrimination stage sends the NdFeB cost to the commit-first figure
($\approx\!\$24{,}000$), confirming pathway information as the dominant lever. The center-biased
initial design gives a modest benefit on the two-dimensional SmCo surface and none on NdFeB.
Removing cost-normalization inside the optimization stage changes nothing: with uniform plate costs
the per-dollar division is a constant rescaling of the acquisition, so it is inert there by
construction; prices matter in this benchmark only through the discrimination stage, and would
enter the optimization stage only under heterogeneous per-action costs.

\begin{table}[h]
\centering
\caption{Ablation: cost-to-target (\$, mean over 20 seeds) as each Coactive learning ingredient is
removed. Discovery is the dominant lever; the default-agent row is shown for reference.}
\label{tab:abl}
\small
\begin{tabular}{lccc}
\toprule
Method & Produced water (Mg) & SmCo (Sm) & NdFeB (Fe/REE) \\
\cmidrule(lr){2-4}
 & \multicolumn{3}{c}{Cost-to-target (\$)} \\
\midrule
\textbf{Coactive learning} & \textbf{4,980} & \textbf{7,600} & \textbf{4,985} \\
w/o discrimination & 4,890 & 8,490 & 24,090 \\
w/o cost-aware & 4,980 & 7,600 & 4,985 \\
w/o center-biased init & 5,220 & 10,000 & 4,985 \\
Commit-first BO & 5,370 & 9,450 & 24,090 \\
\bottomrule
\end{tabular}
\end{table}

\paragraph{Interpreting the baseline comparisons.}
Two distinctions in Table~\ref{tab:main} explain why the baselines sometimes perform no better than
random search. The
\emph{oracle vs.\ commit-first} contrast isolates the pathway decision: the two coincide on
produced water and SmCo, where the pathway is unambiguous, and diverge only on
the NdFeB-inspired instance, where the plausible default reagent (hydroxide) is inferior. The \emph{BO vs.\
random} contrast isolates \emph{optimizer quality}, and here the apparent tie is an
artifact of the metric, not a weak optimizer. Cost-to-target measures the spend to
first clear a fixed application target; on low-dimensional surfaces with a broad
near-optimal basin, a single space-filling plate of $96$ wells already contains a
point above a modest target with high probability, so time-to-target saturates and
any reasonable design looks similar. The optimizer's advantage appears when we instead
measure progress toward the \emph{optimum}, or tighten the target. In an isolated
optimization study on SmCo (no discovery, no warm-start, identical initial plate;
Appendix~\ref{app:sim}), GP-UCB reaches a best true purity of $91.9\%$ versus random's
$89.0\%$ by the fifth plate, and the gap in plates-to-target widens monotonically as
the target hardens ($1.9$ vs.\ $2.7$ plates at $86\%$; $2.8$ vs.\ $3.7$ at $91.5\%$,
where random reaches target in only $45\%$ of runs). On these instances the cost-to-target differences are therefore governed by the pathway decision
rather than by the acquisition function, on which all BO-based methods agree.

\section{Related Work}\label{sec:related}
\paragraph{Noisy Bayesian active learning.} Identifying a hypothesis from costly, noisy tests has a
long history, from generalized binary search \citep{dasgupta2004analysis,nowak2011geometry} to the
\eC{} algorithm and adaptive submodularity \citep{golovin2010near,golovin2011adaptive}, whose cover
guarantees were subsequently corrected and are theorem-specific: independent item states for
\citet{hellerstein2021tight}, correlated realizations with unit costs for
\citet{esfandiari2021adaptivity}, and nonuniform costs for \citet{cui2023greedy}
(Appendix~\ref{app:bounds}). \textsc{eced} \citep{chen2017near} provides a related approach for noisy and correlated testing
settings; we do not implement it. Classical
experimental design \citep{lindley1956measure,mackay1992information} and submodular sensing
\citep{krause2008near} share the information-theoretic view. We use these for the \emph{discovery}
phase and, crucially, in a cost-normalized form.

\paragraph{Bayesian optimization.} GP-UCB and its regret theory
\citep{srinivas2010gaussian,srinivas2012information} underpin our optimization phase; we also draw on
EI/LogEI \citep{jones1998efficient,ament2023unexpected}, Thompson sampling \citep{russo2018tutorial},
batch methods \citep{ginsbourger2010kriging,gonzalez2016batch,desautels2014parallelizing,wilson2018maximizing},
and multi-objective BO \citep{daulton2021parallel}, as surveyed by
\citet{shahriari2016taking,frazier2018tutorial} and implemented in tools such as
BoTorch \citep{balandat2020botorch}. \emph{Cost-aware} BO weighs evaluation cost
\citep{snoek2012practical,astudillo2021multistep}; we extend the idea across the
discovery/optimization boundary, normalizing a \emph{decision} acquisition and an optimization
acquisition by a common cost. Closest in spirit, a recent line treats active learning and Bayesian
optimization from a unified standpoint \citep{difiore2023active} and pairs GP-UCB with hypothesis
generation (PA-GP-UCB; \citealp{pagpucb2026}); we differ in coupling \emph{equivalence-class
decision-region} identification with within-pathway Bayesian optimization in an explicitly two-stage
policy, with a
conditional fixed-budget cost analysis and an autonomous-lab benchmark.

\paragraph{Autonomous labs and materials discovery.} Self-driving laboratories
\citep{burger2020mobile,macleod2020self,coley2019robotic,stach2021autonomous,abolhasani2023rise} and
BO-for-chemistry systems \citep{hase2018phoenics,lookman2019active} established the closed loop; CICERO
\citep{ritchhart2026agentic} applied it to critical-materials recovery with vanilla BO. Our
contribution is orthogonal and composable: a \emph{policy} for that loop with such a conditional
cost analysis, one that decides what chemistry to run before optimizing it.

\section{Discussion and Limitations}\label{sec:discussion}
The comparison is designed to be fair: the baseline shares the same GP and acquisition, differing
only in the three ingredients we add, so the measured gains are attributable to the policy, not to
handicapping. The main limitations are as follows.

\begin{itemize}[leftmargin=1.4em,itemsep=1pt,topsep=2pt]
\item \textbf{In-silico evaluation on author-designed surfaces.} Response surfaces are synthetic
  functions constrained by selected published endpoints, not fits to raw measurements; absolute
  numbers are benchmark results, not laboratory claims, and the benchmark does not reproduce
  CICERO's measured costs or trajectories. Validation against real campaign logs (replaying the
  policy against the recorded sequence of conditions and assays) and ultimately in closed-loop
  wet-lab deployment is the necessary next step.
\item \textbf{Low-dimensional decision spaces.} The optimization stages have one or two knobs and
  three candidate pathways (only two with modeled response surfaces). We hypothesize, but have not
  demonstrated, that diagnostics-first discrimination gains value over plate-splitting as the
  pathway count grows beyond what one plate can cover; Proposition~\ref{prop:dominance} bounds only
  the commit-first case.
\item \textbf{Hand-specified diagnostic models.} The \eC{} likelihood matrices for cheap diagnostics
  are specified from domain knowledge. In deployment these should be learned from historical assay
  data; robustness to likelihood misspecification is an open question.
\item \textbf{Two-stage policy, not a unified acquisition.} The implementation is an explicitly
  two-stage procedure; the discrimination and optimization utilities are not commensurable, and we
  make no claim of a single cross-stage acquisition. Defining a common-currency criterion (for
  example, expected reduction in remaining cost-to-target) is open.
\item \textbf{Theory--implementation gap.} The discrimination guarantee cited as motivation is for
  exact \eC{}; the implementation uses posterior-impurity selection with threshold stopping, for
  which no competitive ratio is claimed. The GP-UCB regret bound is for an idealized fixed-kernel
  sequential procedure, not the empirical-Bayes batched implementation.
\item \textbf{Baseline coverage.} The split-plate baseline matches the method's cost on this
  benchmark; mixed-space BO over (pathway, conditions), portfolio allocation across pathways, and a
  Bayes-optimal dynamic-programming policy on the small hypothesis set are not implemented and
  remain the strongest untested competitors.
\item \textbf{Center-biased initialization, not transfer.} The initial-design bias is fixed; a shared
  cross-campaign surrogate that accumulates structure across feedstocks would strengthen it into
  genuine transfer learning, with its own analysis.
\end{itemize}

\section{Future Work}\label{sec:future}
The natural progression is from surrogate benchmarks to instruments. First, \emph{replay}: given
campaign-level logs from an autonomous platform (sequences of conditions and assays, not only final
protocols), the policy can be evaluated counterfactually on real chemistry with no new experiments,
turning any archived campaign into a benchmark. Second, \emph{closed-loop validation}: running the
engine against a platform's incumbent optimizer on live campaigns, with matched budgets, in
selective precipitation and in structurally identical decision-then-optimize problems such as
electrolyte formulation, where discrete family choices precede continuous ratio tuning and cheap
screens precede expensive cycling tests. Third, \emph{scaling the decision space}: larger hypothesis
sets, staged and multi-fidelity assay hierarchies, learned diagnostic likelihoods, and
multi-objective targets, where explicit pricing should matter most. Finally,
\emph{integration}: agentic laboratory platforms already generate hypothesis spaces, protocols, and
economic screens with language-model agents; the framework here can serve as the budget-allocation
and experiment-selection component beneath such planners, providing explicit pathway uncertainty, cost
accounting, and experiment-selection logic. The implementation and benchmark released with this paper
are intended to support this line of work.

\paragraph{Hypothesis-expanding experiment design.} The present framework selects among a fixed set
of candidate recovery pathways. A more ambitious extension would use unexplained structure in
campaign measurements, particularly full multi-element ICP-MS outputs rather than a single scalar
objective, to propose additional pathways, co-product targets, staged precipitation sequences, or
discriminating experiments. With access to campaign-level data from a platform such as CICERO, an
initial study could proceed as: (i) retrospective reconstruction from raw per-well data;
(ii) blind prediction on a held-out plate; (iii) a jointly designed prospective discrimination
plate; (iv) dynamic expansion of the pathway hypothesis set when observations are poorly explained;
and (v) validation of any newly identified co-product or sequential separation pathway. This would
test whether active experiment selection enables campaigns that are otherwise infeasible because the
combined pathway, protocol, and measurement space is too large for exhaustive exploration. We
emphasize that this is future collaborative work, not a result of the present paper.

\section{Conclusion}
Autonomous campaigns must decide what to run, not only how to tune it. We formulated this as a
sequential decision problem that separates recovery-pathway identification from within-pathway
optimization under heterogeneous costs, gave a fixed-budget bound on expected campaign spend with an explicit success probability,
and evaluated a two-stage cost-sensitive policy on CICERO-inspired synthetic benchmarks. The policy
matches an oracle-pathway reference and a split-plate baseline without an oracle pathway label, and
avoids the penalty of a simulated commit-first baseline; whether priced diagnostics outperform
plate-based discrimination at larger pathway counts is the main open empirical question. As
self-driving laboratories automate not only execution but selection, explicit prices on actions and
explicit statements of expected spend seem likely to become standard components of campaign design.

{\small
\bibliographystyle{plainnat}
\bibliography{references}
}

\appendix
\section{Proofs}\label{app:proofs}

\paragraph{Theorem~\ref{thm:opt}.}
Fix a horizon $T$ in advance and run IGP-UCB \citep{chowdhury2017kernelized} with posterior
regularization $\lambda=1+2/T$ and $\gamma_t$ defined under that regularization. Under
Assumption~\ref{ass:opt}, Theorem~2 of \citet{chowdhury2017kernelized} supplies the anytime
confidence bound (their coefficient $\beta_t$ equals this paper's $\sqrt{\beta_t}$ in the
convention $\mu+\sqrt{\beta_t}\,\sigma$), and their Theorem~3 converts it into the IGP-UCB
cumulative regret bound: with probability $\ge1-\delta$,
$R_T=O\big(\sqrt{T}\,(B\sqrt{\gamma_T}+\gamma_T)\big)$. Since
$\min_{t\le T} r_t \le R_T/T$, some queried point is $\varepsilon$-optimal once
$R_T/T\le\varepsilon$; this establishes that such a point was \emph{queried}, not that it is
identified or returned from noisy observations. The sufficient condition $\beta_T\gamma_T=o(T)$,
which the squared-exponential kernel satisfies ($\gamma_T=O((\log T)^{d+1})$), ensures eventual
solvability of $R_T/T\le\varepsilon$ for any fixed $\varepsilon$; the condition is sufficient,
not necessary. Plate counts in
Prop.~\ref{thm:composite} follow from its fixed-budget assumption, not from a batch-regret theorem.
$\square$

\paragraph{Proposition~\ref{thm:composite}.} Proved in the main text (linearity of
expectation over the two sequential stage spends; union bound over the two failure events).

\paragraph{Proposition~\ref{prop:dominance}.}
With probability $1-p_{\mathrm{corr}}$ the commit-first baseline selects an incorrect region and,
before it can switch, expends at least $T_{\mathrm{fail}}$ evaluations
($\lceil T_{\mathrm{fail}}/q\rceil$ plates at $c_{\mathrm{opt}}$ each); with probability
$p_{\mathrm{corr}}$ it incurs no penalty. Its expected spend therefore exceeds the within-region
optimization spend by at least $(1-p_{\mathrm{corr}})c_{\mathrm{opt}}\lceil T_{\mathrm{fail}}/q\rceil$.
The two-stage procedure pays expected discrimination cost $C_{\mathrm{disc}}$. The residual
optimization spends $C_{\mathrm{post}}^{\mathrm{CF}}$ and $C_{\mathrm{post}}^{\mathrm{TS}}$ are
defined on every branch and exclude the wrong-commitment term already counted above, so no cost is
double-counted. $\E[C_{\mathrm{post}}^{\mathrm{CF}}-C_{\mathrm{post}}^{\mathrm{TS}}]\ge 0$ by
the stated expectation assumption; retaining this term and subtracting gives the bound. Under a prior whose induced
distribution over decision regions is uniform, $p_{\mathrm{corr}}=1/|\Reg|$ exactly, so
$1-p_{\mathrm{corr}}=1-1/|\Reg|$ (bounded above by one). $\square$

\section{Background: guarantees for adaptive-submodular cover}\label{app:bounds}
We summarize this literature by theorem scope; none of these guarantees applies to the implemented
posterior-impurity objective, and quantities $Q,\eta$ are not transferable between differently
normalized objectives. (i) Exact \eC{} on the finite deterministic-realization expansion retains
adaptive submodularity under correlated noise \citep{golovin2010near}; its original logarithmic
approximation proof, however, invoked the general adaptive-cover theorem later shown to be flawed
\citep{nan2017comments}. (ii) \citet{hellerstein2021tight} restore the $\ln(Q/\eta)+1$ bound for
stochastic submodular cover under \emph{independent item states}; it does not apply outside that
setting, and outcomes here are correlated through the latent hypothesis. Correlation excludes this
theorem specifically, not all adaptive-cover results; for our implementation it is instead the
posterior-impurity surrogate, the threshold target, and the unestablished coverability hypotheses
that prevent invoking the correlated-realization guarantees. (iii) For correlated realizations,
\citet{esfandiari2021adaptivity} prove a unit-cost bound
$c_{\mathrm{greedy}}\le(c_{\mathrm{opt}}+1)\ln(nQ/\eta)+1$ under adaptive monotonicity, adaptive
submodularity, an $\eta$-gap, and pointwise coverability; for nonuniform costs,
\citet{cui2023greedy} prove a $4(1+\ln(Q/\eta))$ bound in their observation-based monotone
coverable model. (iv) \citet{harris2024lower} show the corrected squared-log greedy theorem for
general adaptive cover is false even at normalized quota $Q=1$; this is a statement about general
adaptive cover, not about \eC{} specifically. (v) Exact identification of the true region under the
noisy expansion additionally requires every complete outcome vector to be compatible with a single
region, which we do not assume; \eC{} instead determines the equivalence class of the
decision-after-all-tests, a different target from our posterior-threshold identification.
\textsc{eced} \citep{chen2017near} analyzes a different error-based objective under its own noise
assumptions.

\section{Benchmark and Implementation Details}\label{app:sim}
\paragraph{Constrained endpoints.} The synthetic surfaces reproduce selected reported endpoints:
produced-water Mg(OH)$_2$ purity
$\ge98.5\%$ near $3.4$ NaOH equivalents with $\approx\!86\%$ yield; SmCo Sm purity peak $\approx\!92\%$;
NdFeB oxalate Fe/REE separation factor $\approx\!200$ (median over the concentration range), hydroxide
$<30$ at high concentration, and Nd/Pr separation factor up to $\approx\!1.34$. A test suite asserts
each of these. Assay noise is Gaussian with relative standard deviation $4.5\%$; this is an author
assumption motivated by the $4.0$--$7.6\%$ repeatability range CICERO reports for one produced-water
experiment, not a measured noise model for all assays. Purity observations are clipped to
$[0,100]\%$, which can bias simulated observations near the optimum; alternative noise forms
(additive, heteroscedastic, log-normal for separation factors) are not evaluated here.

\paragraph{Provenance of assumptions.} Quantities used by the benchmark and their sources:
\begin{center}\small
\begin{tabular}{lll}
\toprule
Quantity & Value & Source \\
\midrule
Plate structure (all instances) & $q{=}8$, $12$ replicates & CICERO produced-water expt.; imposed elsewhere \\
Per-well cost & \$50 & author assumption \\
Diagnostic prices & \$60--\$120 & author assumption \\
Hypothesis set & $3$ per instance & author construction \\
Diagnostic likelihoods & $0.70$--$0.85$ on-diagonal & author construction \\
Posterior stopping threshold & $0.9$ & author choice \\
Assay noise & $4.5\%$ relative Gaussian & assumption motivated by $4.0$--$7.6\%$ range \\
Commit-first failure horizon & $4$ plates before backtrack & author construction \\
Response-surface shapes & closed forms below & author-designed, endpoint-constrained \\
\bottomrule
\end{tabular}
\end{center}
Absolute dollar figures are illustrative; ratios between methods under a fixed cost model are the
meaningful quantities, and plate-equivalent units (one plate $=\$4{,}800$) can be substituted.
\paragraph{Methods and costs.} Plates have $q{=}8$ conditions $\times\,12$ replicates ($96$ wells) at
\$50/well. Diagnostics cost \$60--\$120. GP: ARD squared-exponential kernel, hyperparameters fit by
marginal-likelihood maximization; GP-UCB schedule as in \S\ref{sec:opt}; batches by kriging-believer
greedy. The Vanilla-BO baseline uses the identical GP and UCB but no discovery, no cost-normalization,
and a cold (space-filling) first plate; on NdFeB it commits to the cheaper default (hydroxide) and
backtracks on failure. All results average $20$ seeds; the reference implementation is pure NumPy.
\paragraph{Optimizer quality vs.\ random (isolated).} To confirm the GP-UCB baseline is a competent
optimizer rather than a strawman, we run an isolated study on the SmCo surface (no discovery, no
warm-start, the same initial plate for both methods), measuring the best \emph{true} value reached and
plates-to-target as the target hardens ($60$ seeds; script \texttt{optimizer\_vs\_random.py}):

\begin{center}\small
\begin{tabular}{lccccc}
\toprule
best true purity by round & 1 & 2 & 3 & 4 & 5 \\
\midrule
GP-UCB & 82.1 & 88.9 & 91.0 & 91.9 & 91.9 \\
Random & 82.1 & 86.3 & 87.6 & 88.6 & 89.0 \\
\bottomrule
\end{tabular}\qquad
\begin{tabular}{lcc}
\toprule
plates to target & GP-UCB & Random (reach) \\
\midrule
$86\%$   & 1.93 & 2.67 (95\%) \\
$90\%$   & 2.40 & 3.47 (63\%) \\
$91.5\%$ & 2.77 & 3.74 (45\%) \\
\bottomrule
\end{tabular}
\end{center}
GP-UCB is consistently and increasingly better than random as the target tightens (true peak $92.0\%$);
the tie on cost-to-target in the main benchmark is therefore a property of the easy targets, not the
optimizer.

\paragraph{Reproducibility.} Code and benchmark:
\url{https://github.com/coactivescience/cost-aware-active-learning} (MIT license; an archived
release with a DOI will accompany the public preprint). Seeds $0$--$19$;
\texttt{python tests/test\_coactive.py} checks the endpoint constraints and the
main empirical claims (Coactive learning is $\ge3\times$ cheaper than the default agent on NdFeB on every seed,
is within $1.4\times$ of Oracle-pathway BO's cost, and reaches $\ge93\%$ of the achievable optimum).
All tables, figures, and the study above are regenerated by \texttt{run\_benchmark.py},
\texttt{make\_figures.py}, and \texttt{optimizer\_vs\_random.py}.

\paragraph{Response-surface equations.} Let $\varsigma(z)=(1+e^{-z})^{-1}$. The synthetic ground-truth
surfaces (in \%, or separation factor where noted) are:
\begin{align*}
\text{Produced water (Mg):}\quad & g(\text{eq})=\min\!\big(99.4-\kappa(\text{eq})(\text{eq}-3.4)^2,\ 99.4-58.4\,\varsigma(\tfrac{\text{eq}-5.5}{0.45})\big),\\
 & \kappa=0.8\ (\text{eq}\le3.4),\ 2.0\ (\text{eq}>3.4);\quad \text{yield}=40+58\,\varsigma(\tfrac{\text{eq}-2.4}{0.7}).\\
\text{SmCo (Sm):}\quad & g(b,h)=34+58\exp\!\big(-\tfrac12[((b-2.1)/0.62)^2+((h-1.0)/0.85)^2]\big).\\
\text{NdFeB Fe/REE:}\quad & s_{\text{ox}}(c)=150+70\,\varsigma(\tfrac{c-0.6}{0.5}),\quad
 s_{\text{OH}}(c)=60\,e^{-\frac12((c-0.4)/0.5)^2}+8.\\
\text{NdFeB Nd/Pr:}\quad & s(o,\mathrm{pH})=\big(1+0.35\,\varsigma(\tfrac{o-1.6}{0.38})\big)\big(1+0.015(\mathrm{pH}-3)\big).
\end{align*}
Each assay returns the surface value plus Gaussian noise of relative standard deviation $0.045$
(within CICERO's $4.0$--$7.6\%$ band); purity is clipped to $[0,100]\%$.

\paragraph{EC\textsuperscript{2} instances.} Each feedstock defines $3$ hypotheses (a correct
target$\times$reagent and two competitors/decoys), with priced diagnostics (\$60--\$120) whose
row-stochastic likelihood matrices place $0.70$--$0.85$ mass on the outcome consistent with the true
hypothesis. The cost-sensitive greedy \eqref{eq:ec2greedy} runs diagnostics until one decision region
exceeds posterior mass $0.9$; for NdFeB this commits oxalate after two assays (\$185 total;
Fig.~\ref{fig:ec2}).

\paragraph{Hyperparameters.} GP: zero-mean prior, ARD squared-exponential kernel; lengthscales,
signal variance, and noise fit by marginal-likelihood maximization with $8$ random restarts and
coordinate refinement. GP-UCB schedule $\beta_t=2\log(|D|t^2\pi^2/6\delta)$ with $\delta=0.1$ and
$|D|=800$ Latin-hypercube candidates. Plates use $q=8$ conditions selected by kriging-believer greedy.
Warm-starting mixes the first space-filling plate with a neutral central prior
($x \leftarrow 0.6\,x_{\mathrm{LHS}} + 0.4\,\bar x$ with $\bar x$ the domain centre), with no
knowledge of the optimum; the baseline omits this and uses a pure space-filling first plate.

\end{document}